\newcommand{\KL}[2]{\textit{KL}(#1\mid\mid#2)}
\newcommand{\PM}[1]{\mathbb{P}_{#1}} 
\newcommand{\PJ}[2]{\mathbb{P}_{#1#2}}  
\newcommand{\PI}[2]{\mathbb{P}_{#1}\otimes\mathbb{P}_{#2}}  
\newcommand\PP{\mathbb{P}}
\newcommand{\Tau}{\mathcal{T}}
\newcommand{\FGen}{T}
\newcommand\RR{\mathbb{R}}
\newcommand\EE{\mathbb{E}}
\newcommand{\ent}{\mathcal{H}}
\newcommand{\SN}{\FGen_{\phi}}
\newtheoremstyle{questionstyle}
  {\topsep}   
  {0}         
  {\itshape}  
  {0pt}       
  {\bfseries} 
  {.}         
  {5pt plus 1pt minus 1pt} 
  {}          
\theoremstyle{questionstyle}\newtheorem{question}{Question}
\newtheorem{theorem}{Theorem}
\newtheorem{lemma}[theorem]{Lemma}
\newcommand\blfootnote[1]{%
  \begingroup
  \renewcommand\thefootnote{}\footnote{#1}%
  \addtocounter{footnote}{-1}%
  \endgroup
}
\title{Mutual Information-based State-Control for Intrinsically Motivated Reinforcement Learning}
\author{Rui Zhao$^{1}$
\And Yang Gao$^{2}$  
\And Pieter Abbeel$^{2}$ 
\And Volker Tresp$^{1}$ 
\And Wei Xu$^{3}$}
\begin{document}

\maketitle

\begin{abstract}
In reinforcement learning, an agent learns to reach a set of goals by means of an external reward signal. In the natural world, intelligent organisms learn from internal drives, bypassing the need for external signals, which is beneficial for a wide range of tasks. Motivated by this observation, we propose to formulate an intrinsic objective as the mutual information between the goal states and the controllable states. This objective encourages the agent to take control of its environment. Subsequently, we derive a surrogate objective of the proposed reward function, which can be optimized efficiently. Lastly, we evaluate the developed framework in different robotic manipulation and navigation tasks and demonstrate the efficacy of our approach. A video showing experimental results is available at \url{https://youtu.be/CT4CKMWBYz0}.\blfootnote{
$^{1}$Siemens AG \& Ludwig Maximilian University of Munich. 
$^{2}$University of California, Berkeley.
$^{3}$Horizon Robotics.
Correspondence to: Rui Zhao {\tt\small $\lbrace$zhaorui.in.germany@gmail.com$\rbrace$}.}
\end{abstract}


\section{Introduction}
In psychology~\cite{sansone2000intrinsic}, behavior is considered intrinsically motivated when it originates from an internal drive. 
An intrinsic motivation is essential to develop behaviors required for accomplishing a broad range of tasks rather than solving a specific problem guided by an external reward.

Intrinsically motivated reinforcement learning (RL)~\cite{chentanez2005intrinsically} equips an agent with various internal drives via intrinsic rewards, such as curiosity~\cite{schmidhuber1991possibility,pathak2017curiosity,burda2018large}, diversity~\cite{gregor2016variational,haarnoja2018soft,eysenbach2018diversity}, and empowerment~\cite{klyubin2005empowerment,salge2014empowerment,mohamed2015variational}, which allow the agent to develop meaningful behaviors for solving a wide range of tasks. 
Mutual information (MI) is a core statistical quantity that has many applications in intrinsically motivated RL.
\citet{still2012information} calculate the curiosity bonus based on the MI between the past and the future states within a time series.
\citet{mohamed2015variational} developed a scalable approach to calculate a common internal drive known as empowerment, which is defined as the channel capacity between the states and the actions.
\citet{eysenbach2018diversity} use the MI between skills and states as an intrinsic reward to help the agent to discover a diverse set of skills.
In multi-goal RL \cite{schaul2015universal,andrychowicz2017hindsight,plappert2018multi}, \citet{warde2018unsupervised} propose to utilize the MI between the high-dimensional observation and the goals as the reward signal to help the agent to learn goal-conditioned policies with visual inputs.
To discover skills and learn the dynamics of these skills for model-based RL, \citet{sharma2020dynamics} recently designed an approach based on MI between the next state and the current skill, conditioned on the current state.

In this paper, we investigate the idea that agent's ``preparedness'' to control the states to reach any potential goal would be an effective intrinsic motivation for RL agents.
We formulate the ``preparedness'' of control as the MI between the goal states and agent's controllable states.
This internal drive extends agent's controllability from controllable states to goal states and subsequently prepares the agent to reach any goal.
It makes learning possible in the absence of hand-engineered reward functions or manually-specified goals. 
Furthermore, learning to ``master'' the environment potentially helps the agent to learn in sparse reward settings. 
We propose a new unsupervised RL method called Mutual Information-based State-Control (MISC).
During the learning process of the agent, an MI estimator is trained to evaluate the MI between the goal states and agent's controllable states. 
Concurrently, the agent is rewarded for maximizing the MI estimation.

\begin{figure*}
  \centering
  \includegraphics[width=4.8 in]{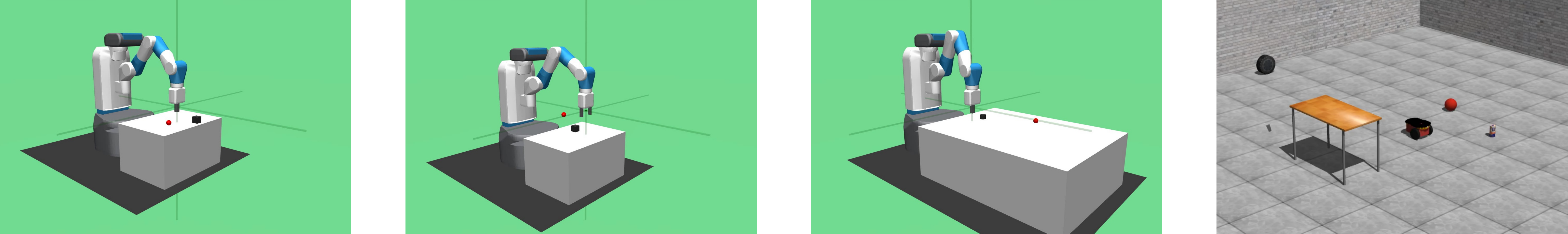}
  \caption{Fetch robot arm manipulation tasks in OpenAI Gym and a navigation task based on the Gazebo simulator: 
\texttt{FetchPush}, \texttt{FetchPickAndPlace}, \texttt{FetchSlide}, \texttt{SocialBot-PlayGround}.}
  \label{fig:fetch3env1nav}
\end{figure*}

This paper contains the following five contributions.
First, we introduce MISC for intrinsically motivated RL.
Secondly, we derive a scalable MI surrogate objective for optimization.
Thirdly, we evaluate the developed framework for the robotic tasks of manipulation and navigation and demonstrate the control behavior that agents learned purely via the intrinsic reward.
Fourthly, incorporating the intrinsic reward with the task reward, we compare our approach with state-of-the-art methods.
Last but not least, we observe that the learned MI estimator from one task can be transferred to a different task and still accelerate learning.


\section{Preliminaries}

We consider multi-goal RL tasks, like the robotic simulation scenarios provided by OpenAI Gym~\cite{plappert2018multi}, where four tasks are used for evaluation, including push, slide, pick \& place with the robot arm, and a newly designed navigation task with a mobile robot in Gazebo~\cite{koenig2004design}, as shown in Figure~\ref{fig:fetch3env1nav}.
Accordingly, we define the following terminologies for these scenarios. 

\subsection{Goal States, Controllable States, and Reinforcement Learning Settings} 
The goal in the manipulation task is to move the target object to a desired position.
For the navigation task, the goal for the robot is to navigate to the target ball.
These goals are sampled from the environment at the beginning of each episode.
Note that in this paper we consider that the goals can be represented by states~\cite{andrychowicz2017hindsight}, which leads us to the concept of goal states $s^g$.

In this paper, the \textbf{goal state} $s^g$ refers to the state variable that the agent is interested in. For example, it can be the position of the target object in a manipulation task.  
A related but different concept is the \textbf{environment goal} $g^e$, which is a desired value of the goal state in the episode. For example, it is a particular goal position of the target object in the manipulation task. 
The \textbf{controllable state} $s^c$ is the state that can be directly influenced by the agent~\cite{borsa2019observational}, such as the state of the robot and its end-effector.
The goal states and the controllable states are mutually exclusive.
The state split is under the designer's control.
In this paper, we use upper letters, such as $S$, to denote random variables and the corresponding lower case letter, such as $s$, to represent the values of random variables.
We consider an agent interacting with an environment. We assume the environment is fully observable, including a set of state $\mathcal{S}$, a set of action $\mathcal{A}$, a distribution of initial states $p(s_0)$, transition probabilities $p(s_{t+1} \mid s_t, a_t)$, a reward function $r$: $\mathcal{S} \times \mathcal{A} \rightarrow \mathbb{R}$. 


\section{Method}
\label{sec:misc}
\begin{figure*}
    \centering
    \begin{minipage}{0.45\linewidth}
        \includegraphics[width=\linewidth]{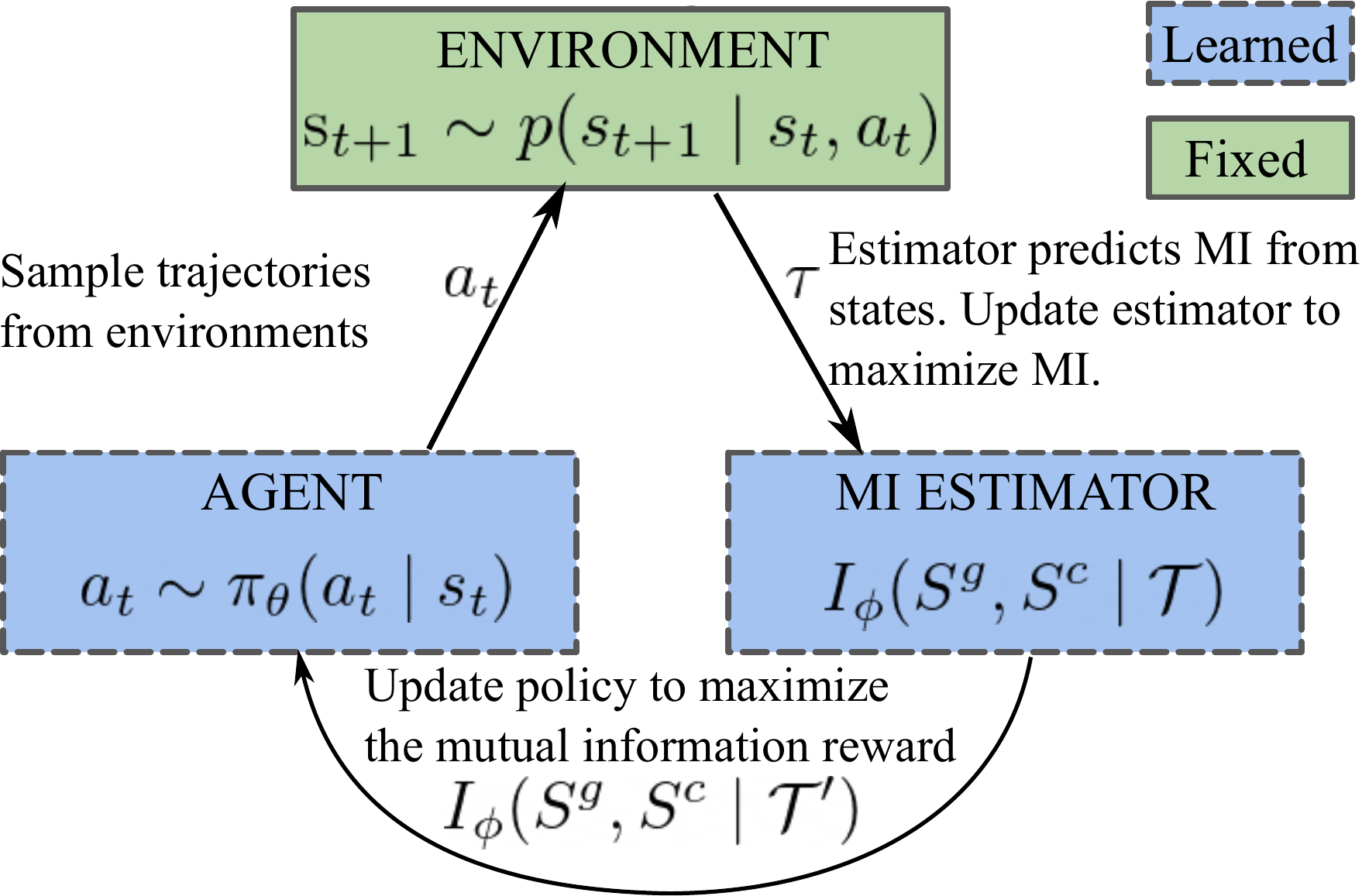}
    \end{minipage} \hfill
    \begin{minipage}{0.54\linewidth}
    
    \begin{algorithm}[H]
    \DontPrintSemicolon
    \SetAlgoLined
    \While{not converged}{
        Sample an initial state $s_0 \sim p(s_0)$.\\
        \For{$t \leftarrow 1$ \KwTo $steps\_per\_episode$}{
                Sample action $a_t \sim \pi_{\theta}(a_t \mid s_t)$.\\
                Step environment $s_{t+1} \sim p(s_{t+1} \mid s_t, a_t)$.\\
                Sample transitions $\Tau'$ from the buffer.\\
                Set intrinsic reward $r=I_{\phi}(S^g; S^c \mid \Tau')$.\\
                Update policy ($\theta$) via DDPG or SAC.\\
                Update the MI estimator ($\phi$) with SGD.
              }
    } 
    \caption{MISC}\label{algo:misc}
    \end{algorithm}
    \end{minipage}
    \caption{\textbf{MISC Algorithm}:
    We update the estimator to better predict the MI, and update the agent to control goal states to have higher MI with the controllable states. \label{fig:misc}}
\end{figure*}

We focus on agents learning to control goal states purely by using its observations and actions without supervision.
Motivated by the idea that an agent should be ``prepared'' to control the goal state with its own directly controllable state, we formulate the problem of learning without external supervision as one of learning a policy $\pi_{\theta}(a_t \mid s_t)$ with parameters $\theta$ to maximize intrinsic MI rewards, $r = I(S^g; S^c)$. 
In this section, we formally describe our method, MI-based state control.

\subsection{Mutual Information Reward Function}

Our framework simultaneously learns a policy and an intrinsic reward function by maximizing the MI between the goal states and the controllable states.
Mathematically, the MI between the goal state random variable $S^g$ and the controllable state random variable $S^c$ is represented as follows:
\begin{align}
            \label{eq:mi-1}
I(S^g; S^c) &= H(S^g) - H(S^g \mid S^c) \\
            \label{eq:mi-2}
            &= \KL{\PJ{S^g}{S^c}}{\PI{S^g}{S^c}} \\
            \label{eq:mi-donsker}
            &= \sup_{\FGen : \Omega \to \RR} \EE_{\PJ{S^g}{S^c}}[\FGen] - \log(\EE_{\PI{S^g}{S^c}}[e^{\FGen}]) \\
            \label{eq:mi-lb}
            &\geq\sup_{\phi\in\Phi} \EE_{\PJ{S^g}{S^c}}[\FGen_\phi] - \log(\EE_{\PI{S^g}{S^c}}[e^{\FGen_\phi}])
            = I_{\Phi}(S^g; S^c), 
\end{align}
where $\PJ{S^g}{S^c}$ is the joint probability distribution; $\PI{S^g}{S^c}$ is the product of the marginal distributions $\PM{S^g}$ and $\PM{S^c}$; $\textit{KL}$ denotes the Kullback-Leibler (KL) divergence.
Equation~(\ref{eq:mi-1}) tells us that the agent should maximize the entropy of goal states $H(S^g)$, and concurrently, should minimize the conditional entropy of goal states given the controllable states $H(S^g \mid S^c)$. When the conditional entropy $H(S^g \mid S^c)$ is small, it becomes easy to predict the goal states based on the controllable states. 
Equation~(\ref{eq:mi-2}) gives us the MI in the KL divergence form.

MI is notoriously difficult to compute in real-world settings~\cite{hjelm2018learning}.
Motivated by MINE~\cite{belghazi2018mine}, we use a lower bound to approximate the MI quantity $I(S^g; S^c)$. 
First, we rewrite Equation~(\ref{eq:mi-2}), the KL formulation of the MI objective, using the Donsker-Varadhan representation, to Equation~(\ref{eq:mi-donsker})~\cite{donsker1975asymptotic}.
The input space $\Omega$ is a compact domain of $\RR^d$, i.e., $\Omega \subset \RR^d$, and the supremum is taken over all functions $\FGen$ such that the two expectations are finite.
Secondly, we lower bound the MI in the Donsker-Varadhan representation with the compression lemma in the PAC-Bayes literature and then derive Equation~(\ref{eq:mi-lb})~\cite{banerjee2006bayesian,belghazi2018mine}.
The expectations in Equation~(\ref{eq:mi-lb}) are estimated by using empirical samples from $\PJ{S^g}{S^c}$ and $\PI{S^g}{S^c}$.
We can also sample the marginal distributions by shuffling the samples from the joint distribution along the axis~\cite{belghazi2018mine}.
The derived MI reward function, $r=I_{\Phi}(S^g; S^c)$, can be trained by gradient ascent. The statistics model $\FGen_{\phi}$ is parameterized by a deep neural network with parameters $\phi \in \Phi$, which is capable of estimating the MI with arbitrary accuracy.

\subsection{Efficient Learning State-Control}

At the beginning of each episode, the agent takes actions $a_{t}$ following a partially random policy, such as $\epsilon$-greedy, to explore the environment and collects trajectories into a replay buffer.
The trajectory $\tau$ contains a series of states, $\tau=\{s_1, s_2, \ldots, s_{t^*}\}$, where $t^{*}$ is the time horizon of the trajectory.
Its random variable is denoted as $\Tau$.
Each state $s_t$ consists of goal states $s_t^g$ and controllable states $s_t^c$. 

For training the MI estimator network, we first randomly sample the trajectory $\tau$ from the replay buffer.
Then, the states $s_{t}^{c}$ used for calculating the product of marginal distributions are sampled by shuffling the states $s_{t}^{c}$ from the joint distribution along the temporal axis $t$ within the trajectory, see Equation~(\ref{eq:mi-eq-surrogate}) Left-Hand Side (LHS).
Note that we calculate the MI by using the samples from the same trajectory. 
If the agent does not alter the goal states during the episode, then the MI between the goal states and the controllable states remains zero.

We use back-propagation to optimize the parameter $(\phi)$ to maximize the MI lower bound, see Equation~(\ref{eq:mi-eq-surrogate}) LHS. 
However, for evaluating the MI, this lower bound, Equation~(\ref{eq:mi-eq-surrogate}) LHS, is time-consuming to calculate because it needs to process on all the samples from the whole trajectory.
To improve its scalability and efficiency, we derive a surrogate objective, Equation~(\ref{eq:mi-eq-surrogate}) Right-Hand Side (RHS), which is computed much more efficiently.
Each time, to calculate the MI reward for the transition $r=I_{\phi}(S^g; S^c \mid \Tau')$, the new objective only needs to calculate over a small fraction of the complete trajectory, $\tau'$. 
The trajectory fraction, $\tau'$, is defined as adjacent state pairs, $\tau'=\{s_{t}, s_{t+1}\}$, and $\Tau'$ represents its corresponding random variable.

\begin{lemma}
The mutual information quantity $I_{\phi}(S^g; S^c \mid \Tau)$ increases when we maximize the surrogate objective $\EE_{\PM{\Tau'}} [ I_{\phi}(S^g; S^c \mid \Tau')]$, mathematically,  
\begin{align}
\label{eq:mi-eq-surrogate}
I_{\phi}(S^g; S^c \mid \Tau) \ltimes \EE_{\PM{\Tau'}} [ I_{\phi}(S^g; S^c \mid \Tau')],
\end{align}
where $S^g$, $S^c$, and $\Tau$ denote goal states, controllable states, and trajectories, respectively. The trajectory fractions are defined as the adjacent state pairs, namely $\Tau'=\{S_{t}, S_{t+1}\}$.
The symbol $\ltimes$ denotes a monotonically increasing relationship between two variables and $\phi$ represents the parameter of the statistics model in MINE. $\textit{Proof.}$ See Appendix~\ref{app:proof-surrogate}. {\hfill $\square$}
\end{lemma}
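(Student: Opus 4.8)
The plan is to expand both sides of Equation~(\ref{eq:mi-eq-surrogate}) into their explicit empirical Donsker--Varadhan form, show that the two expressions share an identical ``positive'' term, and then control the remaining log-partition terms with Jensen's inequality so that the two quantities are forced to move together. Writing $\FGen_\phi(s_i^g,s_j^c)$ for the statistics network evaluated on a goal state from time $i$ and a controllable state from time $j$, the full-trajectory estimator conditioned on $\tau=\{s_1,\dots,s_{t^*}\}$ reads
\[
I_\phi(S^g;S^c\mid\Tau=\tau)=\frac{1}{t^*}\sum_{t=1}^{t^*}\FGen_\phi(s_t^g,s_t^c)-\log\!\left(\frac{1}{(t^*)^2}\sum_{i=1}^{t^*}\sum_{j=1}^{t^*}e^{\FGen_\phi(s_i^g,s_j^c)}\right),
\]
since the empirical joint distribution places mass on the time-aligned pairs while the product of marginals ranges over all index pairs. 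The surrogate term for a single adjacent pair $\tau'=\{s_t,s_{t+1}\}$ is the same expression restricted to the two indices $\{t,t+1\}$, and $\EE_{\PM{\Tau'}}[\cdot]$ averages it uniformly over $t=1,\dots,t^*-1$.

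First I would compare the positive terms. Averaging $\tfrac12[\FGen_\phi(s_t^g,s_t^c)+\FGen_\phi(s_{t+1}^g,s_{t+1}^c)]$ over adjacent pairs telescopes to $\tfrac{1}{t^*-1}\sum_t \FGen_\phi(s_t^g,s_t^c)$ up to the two boundary endpoints, which coincides with the positive term of the full estimator as $t^*$ grows. Hence any increase of the surrogate realized through the diagonal (time-aligned) evaluations raises precisely the quantity that also appears in the LHS, which is the core of the claimed monotone relationship.

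Next I would treat the log-partition (``negative'') terms. Since $\log$ is concave, Jensen's inequality gives $\tfrac{1}{t^*-1}\sum_t \log Z_t \le \log\!\big(\tfrac{1}{t^*-1}\sum_t Z_t\big)$, where $Z_t=\tfrac14\sum_{i,j\in\{t,t+1\}}e^{\FGen_\phi(s_i^g,s_j^c)}$ is the local partition function, and the inner average $\tfrac{1}{t^*-1}\sum_t Z_t$ collects exactly the near-diagonal entries $|i-j|\le1$ of the full double sum. I would then argue that driving the surrogate upward forces $\FGen_\phi$ to assign large values on time-aligned pairs and small values on the mismatched near-diagonal pairs $(s_t^g,s_{t+1}^c)$ and $(s_{t+1}^g,s_t^c)$; accumulated over all adjacent $t$, this discrimination propagates along the trajectory so that off-diagonal evaluations at larger temporal separations are also suppressed, shrinking the full log-partition and thereby increasing $I_\phi(S^g;S^c\mid\Tau)$.

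The main obstacle is exactly this propagation step: the surrogate only directly constrains the near-diagonal block $|i-j|\le1$, whereas the full estimator sums over all $(t^*)^2$ index pairs, so bridging the local structure to the global one is not a clean inequality. I would either invoke a smoothness property of the statistics network $\FGen_\phi$ (suppression of adjacent mismatches extends to distant pairs because $\FGen_\phi$ is a continuous function of the states it compares) or settle for the monotone-trend statement that $\ltimes$ formalizes, namely that the shared positive term together with the Jensen-controlled negative term ensures the LHS is non-decreasing in the surrogate rather than exactly equal to it. Verifying that the boundary weighting and the near-diagonal restriction do not reverse this trend is where the careful bookkeeping lies.
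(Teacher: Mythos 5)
Your proposal follows essentially the same route as the paper's own proof: expand both sides in the empirical Donsker--Varadhan form, observe that the time-aligned (``positive'') terms coincide up to boundary effects, and then relate the two log-partition terms by exchanging $\log$ of an average with an average of $\log$'s. Your Jensen step is the rigorous one-directional version of the paper's informal device $\log(x) \ltimes x$, which the paper applies twice (once to strip the $\log$ from the full-trajectory partition function, once to re-insert it fraction-wise). The one substantive difference is that you explicitly name the obstacle that the paper's proof silently absorbs: after linearization, the paper asserts an exact equality between $\EE_{\PI{S^g \mid \Tau}{S^c \mid \Tau}}[e^{\FGen_\phi}]$ and $\EE_{\PM{\Tau'}}[\EE_{\PI{S^g \mid \Tau'}{S^c \mid \Tau'}}[e^{\FGen_\phi}]]$, but as your explicit index bookkeeping shows, the former averages $e^{\FGen_\phi(s_i^g,s_j^c)}$ over all $(t^*)^2$ index pairs while the latter is supported only on the near-diagonal block $|i-j|\le 1$, so that equality holds for the joint term but not for the product-of-marginals term. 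Neither your smoothness/propagation sketch nor the paper closes this gap; the lemma as stated is a heuristic monotone-trend claim rather than a theorem, and your writeup is, if anything, more candid than the paper's about where the argument is not a clean inequality. In short: same approach, correctly executed to the same (heuristic) standard, with the genuine weak point correctly localized rather than hidden.
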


The derived MI surrogate objective, Equation~(\ref{eq:mi-eq-surrogate}) RHS, brings us two important benefits. 
First, it enables us to estimate the MI reward for each transition with much less computational time because we only use the trajectory fraction, instead of the trajectory. 
This approximately reduces the complexity from $\mathcal{O}(t^*)$ to $\mathcal{O}(1)$ with respect to the trajectory length $t^{*}$.
Secondly, this way of estimating MI also enables us to assign rewards more accurately at the transition level because now we use only the relevant state pair to calculate the transition reward.

Formally, we define the transition MI reward as the MI estimation of each trajectory fraction, namely
\begin{align}
\begin{split}
r_{\phi}(a_t,s_t)
:= I_{\phi}(S^g; S^c | \Tau') 
= 0.5 {\textstyle \sum}_{i=t}^{t+1} \SN({s}^{g}_i, {s}^{c}_i) - \log(0.5 {\textstyle \sum}_{i=t}^{t+1} e^{\SN({s}^{g}_i,
      \bar{{s}}^{c}_i)}),
\end{split}  
\end{align}
where $({s}^{g}_i, {s}^{c}_i) \sim \PJ{S^g}{S^c\mid \Tau'} $,  $\bar{{s}}^{c}_i \sim \PP_{S^c\mid \Tau'} $, and $\tau'=\{s_{t}, s_{t+1}\}$. 
In case that the estimated MI value is particularly small, we scale the reward with a hyper-parameter $\alpha$ and clip the reward between 0 and 1.

\textbf{Implementation:}
We combine MISC with both deep deterministic policy gradient (DDPG)~\cite{lillicrap2015continuous} and soft actor-critic (SAC)~\cite{haarnoja2018soft} to learn a policy $\pi_{\theta}(a \mid s)$ that aims to control the goal states.
In comparison to DDPG and SAC, the DDPG method improves the policy in a more ``greedy'' fashion, while the SAC approach is more conservative, in the sense that SAC incorporates an entropy regularizer $\ent(A \mid S)$ that maximizes the policy's entropy over actions.
We summarize the complete training algorithm in Algorithm~\ref{algo:misc} and in Figure~\ref{fig:misc}.

\textbf{MISC Variants with Task Rewards:}
We propose three ways of using MISC to accelerate learning in addition to the task reward.
The first method is using the MISC pretrained policy as the parameter initialization and fine-tuning the agent with rewards. 
We denote this variant as ``MISC-f'', where ``-f'' stands for fine-tuning. 
The second variant is to use the MI intrinsic reward to help the agent to explore high MI states. 
We name this method as ``MISC-r'', where ``-r'' stands for reward. 
The third approach is to use the MI quantity from MISC to prioritize trajectories for replay. 
We name this method as ``MISC-p'', where ``-p'' stands for prioritization.

\textbf{Skill Discovery with MISC and DIAYN:}
One of the relevant works on unsupervised RL, DIAYN~\cite{eysenbach2018diversity}, introduces an information-theoretical objective $\mathcal{F}_{\text{DIAYN}}$, which learns diverse discriminable skills indexed by the latent variable $Z$, mathematically,
\begin{align}
\mathcal{F}_{\text{DIAYN}} \nonumber
          = I(S; Z) + \ent(A \mid S, Z)  \nonumber
          \ge  \EE_{\PM{Z}\PM{S}}[\log q_{\phi}(z \mid s) - \log p(z)] + \ent(A \mid S, Z). \nonumber
\end{align}
The first term, $I(S; Z)$, in the objective, $\mathcal{F}_{\text{DIAYN}}$, is implemented via a skill discriminator, which serves as a variational lower bound of the original objective~\cite{barber2003algorithm,eysenbach2018diversity}. 
The skill discriminator assigns high rewards to the agent, if it can predict the skill-options, $Z$, given the states, $S$. 
The second term, $\ent(A \mid S, Z)$, is implemented through SAC \cite{haarnoja2018soft} conditioned on skill-options~\cite{szepesvari2014universal}. 

We adapt DIAYN to goal-oriented tasks by replacing the full states, $S$, with goal states, $S^g$, as $I(S^g; Z)$.
In comparison, our method MISC proposes to maximize the MI between the controllable states and the goal states, $I(S^c; S^g)$. 
These two methods can be combined as follows:
\begin{align}
\mathcal{F}_{\text{MISC+DIAYN}} \nonumber
          &= I(S^c; S^g) + I(S^g; Z) + \ent(A \mid S, Z).
\end{align}
The combination of MISC and DIAYN helps the agent to learn control primitives via skill-conditioned policy for hierarchical RL \cite{eysenbach2018diversity}.

\textbf{Comparison and Combination with DISCERN:}
Another relevant work is Discriminative Embedding Reward Networks (DISCERN) \cite{warde2018unsupervised}, whose objective is to maximize the MI between the goal states $S^g$ and the environmental goals $G^e$, namely $I(S^g; G^e)$.
While MISC's objective is to maximize the MI between the controllable states $S^c$ and the goal states $S^g$, namely $I(S^c; S^g)$.
Intuitively, DISCERN attempts to reach a particular environment goal in each episode, while our method tries to manipulate the goal state to \textit{any} different value. 
MISC and DISCERN can be combined as 
\begin{align}
\nonumber
\mathcal{F}_{\text{MISC+DISCERN}} = I(S^c; S^g) + I(S^g; G^e).
\end{align} 
Through this combination, MISC helps DISCERN to learn its discriminative objective.


\section{Experiments}

To evaluate the proposed methods, we used the robotic manipulation tasks and also a navigation task, see Figure~\ref{fig:fetch3env1nav}~\cite{brockman2016openai,plappert2018multi}.
First, we analyze the control behaviors learned purely with the intrinsic reward (refer to the \href{https://youtu.be/CT4CKMWBYz0?t=4}{video starting from 0:04} and Figure 1 in Appendix~\ref{app:learned-control-behavior}). 
Secondly, we show that the pretrained models can be used for improving performance in conjunction with the task rewards. 
Interestingly, we show that the pretrained MI estimator can be transferred among different tasks and still improve performance. 
We compared MISC with other methods, including DDPG~\cite{lillicrap2015continuous}, SAC~\cite{haarnoja2018soft}, DIAYN~\cite{eysenbach2018diversity}, DISCERN~\cite{warde2018unsupervised}, PER~\cite{schaul2015prioritized}, VIME~\cite{houthooft2016vime}, ICM~\cite{pathak2017curiosity}, and Empowerment~\cite{mohamed2015variational}. 
Thirdly, we show some insights about how the MISC rewards are distributed across a trajectory. 
The experimental details are shown in Appendix~\ref{app:experimental-details}.
Our code is available at \url{https://github.com/ruizhaogit/misc} and \url{https://github.com/HorizonRobotics/alf}.


\begin{question}
What behavior does MISC learn?
\end{question}
We tested MISC in the robotic manipulation tasks.
The object is randomly placed on the table at the beginning of each episode.
During training, the agent only receives the intrinsic MISC reward.
In all three environments, the behavior of reaching objects emerges. 
In the push environments, the agent learns to push the object around on the table. 
In the slide environment, the agent learns to slide the object into different directions. 
Perhaps surprisingly, in the pick \& place environment, the agent learns to pick up the object from the table without any task reward.
All the observations are shown in the uploaded \href{https://youtu.be/CT4CKMWBYz0?t=4}{video starting from 0:04}.



\begin{question}
Can we use learned behaviors to directly maximize the task reward?
\end{question}

\begin{wrapfigure}[10]{r}{0.55\textwidth}
    \vspace{-1em}
    \includegraphics[width=\linewidth]{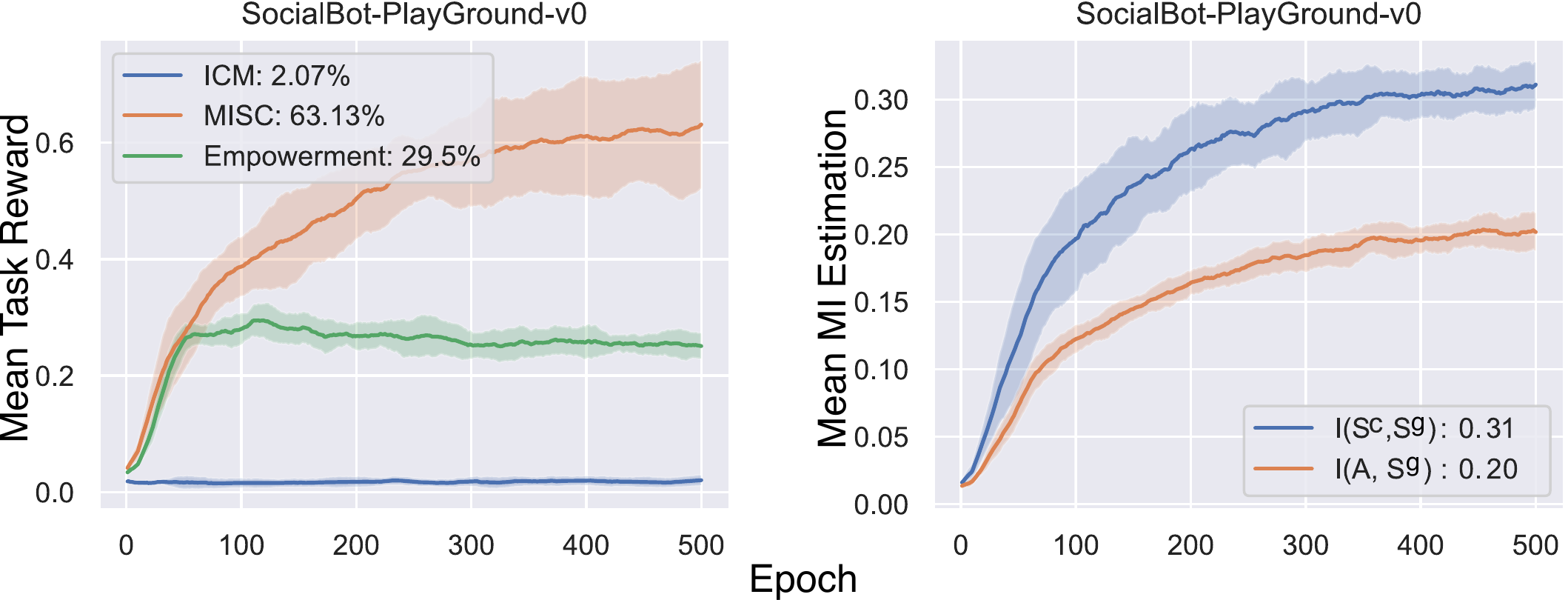}
    \vspace{-2em}
    \caption{\textbf{Experimental results in the navigation task}\label{fig:navigation-task}}
\end{wrapfigure}

We tested our method in the navigation task, which is based on the Gazebo simulator. 
The task reward is 1 if the agent reaches the ball, otherwise, the task reward is 0. 
We combined our method with PPO~\cite{schulman2017proximal} and compared the performance with ICM~\cite{pathak2017curiosity} and Empowerment~\cite{mohamed2015variational}. 
During training, we only used one of the intrinsic rewards such as MISC, ICM, or Empowerment to train the agent. 
Then, we used the averaged task reward as the evaluation metric. 
The experimental results are shown in Figure~\ref{fig:navigation-task} (left). The y-axis represents the mean task reward and the x-axis denotes the training epochs. 
From Figure~\ref{fig:navigation-task} (left), we can see that the proposed method, MISC, has the best performance. 
Empowerment has the second-best performance. 
Figure~\ref{fig:navigation-task} (right) shows that the MISC reward signal $I(S^c, S^g)$ is relatively strong compared to the Empowerment reward signal $I(A, S^g)$.
Subsequently, higher MI reward encourages the agent to explore more states with higher MI. 
A theoretical connection between Empowerment and MISC is shown in Appendix~\ref{app:proof-empowerment}. 
The \href{https://youtu.be/CT4CKMWBYz0?t=104}{video starting from 1:44} shows the learned navigation behaviors.


\begin{question}
How does MISC compare to DIAYN?
\end{question}
\begin{figure*}
  \centering
  \includegraphics[width=0.9\linewidth]{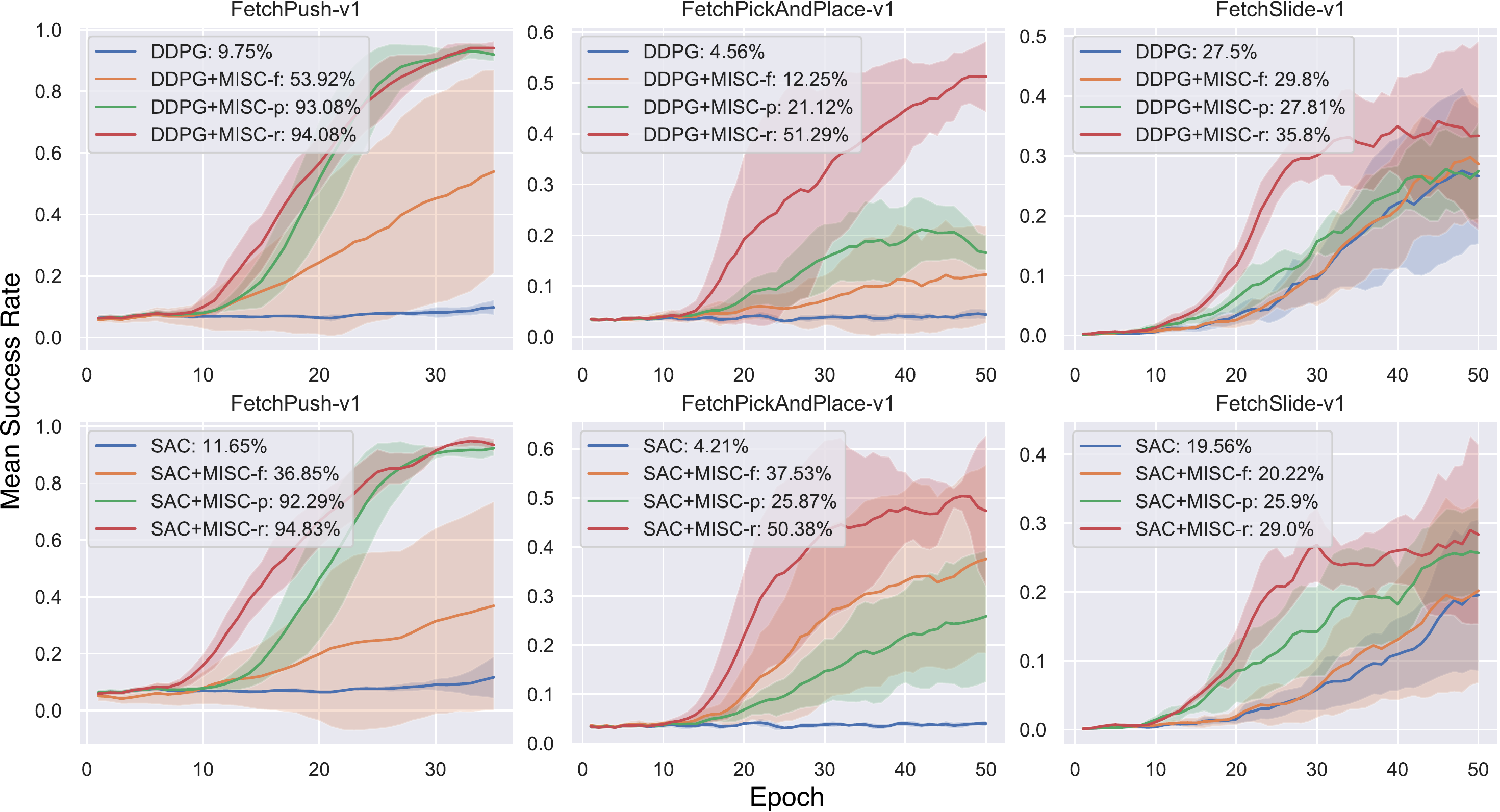} 
  \caption{\textbf{Mean success rate with standard deviation:} The percentage values after colon (:) represent the best mean success rate during training. The shaded area describes the standard deviation.}
  \label{fig:fig_accuracy}
\end{figure*}
We compared MISC, DIAYN and MISC+DIAYN in the pick \& place environment.
For implementing MISC+DIAYN, we first pre-train the agent with only MISC, and then fine-tune the policy with DIAYN. 
After pre-training, the MISC-trained agent learns manipulation behaviors such as, reaching, pushing, sliding, and picking up an object.
Compared to MISC, the DIAYN-trained agent rarely learns to pick up the object. It mostly pushes or flicks the object with the gripper. 
However, the combined model, MISC+DIAYN, learns to pick up the object and moves it to different locations, depending on the skill-option.
These observations are shown in the \href{https://youtu.be/CT4CKMWBYz0?t=48}{video starting from 0:48}.
In short, MISC helps the agent to learn the DIAYN objective. 
The agent first learns to control the object with MISC, and then discovers diverse manipulation skills with DIAYN.


\begin{question}
How does MISC+DISCERN compare to DISCERN?
\end{question}

\begin{wraptable}[8]{r}{0.65\textwidth}
\vspace{-1.7em}
\centering
\caption{\textbf{Comparison of DISCERN with and without MISC}}
\begin{tabular}{ p{2.7cm}    p{2.5cm}   p{2.6cm} } \toprule
Method  & Push (\%) & Pick \& Place (\%) \\ \midrule
DISCERN & 7.94\% $\pm$ 0.71\%  & 4.23\% $\pm$ 0.47\%  \\
R (Task Reward)  & 11.65\% $\pm$ 1.36\% & 4.21\% $\pm$ 0.46\%  \\
R+DISCERN & 21.15\% $\pm$ 5.49\% & 4.28\% $\pm$ 0.52\%  \\
R+DISCERN+MISC & 95.15\% $\pm$ 8.13\% & 48.91\% $\pm$ 12.67\% \\ \bottomrule
\end{tabular}
\label{tab:misc-discern}
\vspace{-1em}
\end{wraptable}

The combination of MISC and DISCERN, encourages the agent to learn to control the object via MISC and then move the object to the target position via DISCERN.
Table~\ref{tab:misc-discern} shows that DISCERN+MISC significantly outperforms DISCERN. 
This is because that MISC emphases more on state-control and teaches the agent to interact with an object.
Afterwards, DISCERN teaches the agent to move the object to the goal position in each episode.

\begin{question}
How can we use the learned behaviors or the MI estimator to accelerate learning?
\end{question}
We investigated three ways of using MISC to accelerate learning in addition to the task reward.
We combined these three variants with DDPG and SAC and tested them in the multi-goal robotic tasks. 
The environments, including push, pick \& place, and slide, have a set of predefined goals, which are represented as the red dots, as shown in Figure~\ref{fig:fetch3env1nav}. 
The task for the RL agent is to manipulate the object to the goal positions. We ran all the methods in each environment with 5 different random seeds and report the mean success rate and the standard deviation, as shown in Figure~\ref{fig:fig_accuracy}. 
The percentage values alongside the plots are the best mean success rates during training. 
Each experiment is carried out with 16 CPU-cores.
From Figure~\ref{fig:fig_accuracy}, we can see that all these three methods, including MISC-f, MISC-p, and MISC-r, accelerate learning in the presence of task rewards. 
Among these variants, the MISC-r has the best overall improvements. 
In the push and pick \& place tasks, MISC enables the agent to learn in a short period of time. 
In the slide tasks, MISC-r also improves the performances by a decent margin.
\begin{figure*}
  \centering
  \includegraphics[width=0.9\linewidth]{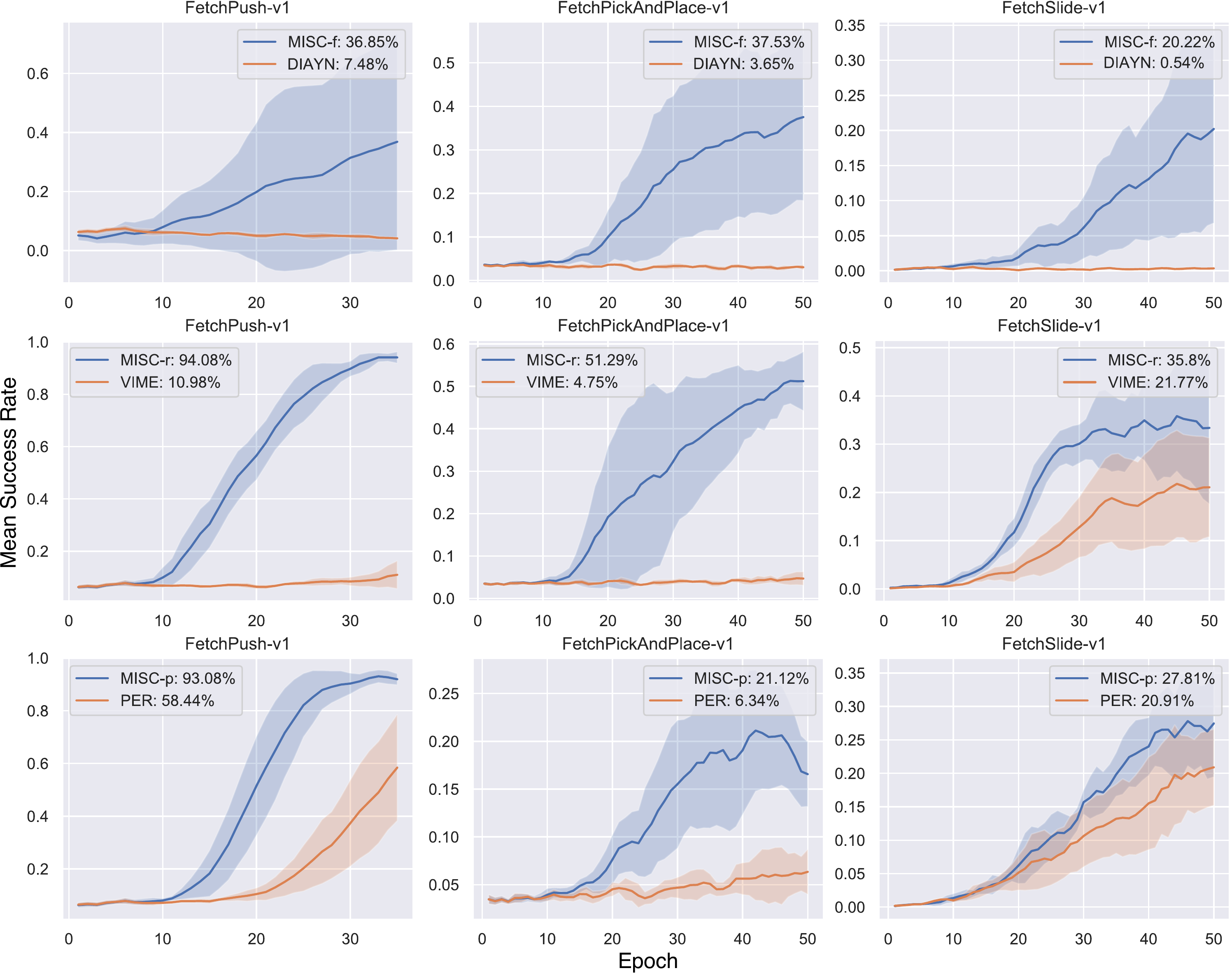} 
  \caption{\textbf{Performance comparison:} We compare the MISC variants, including MISC-f, MISC-r, and MISC-p, with DIAYN, VIME, and PER, respectively.}
  \label{fig:fig_compare}
\end{figure*}
We also compare our methods with more advanced RL methods. 
To be more specific, we compare MISC-f against the parameter initialization using DIAYN~\cite{eysenbach2018diversity}; MISC-p against Prioritized Experience Replay (PER), which uses TD-errors for prioritization~\cite{schaul2015prioritized}; and MISC-r versus Variational Information Maximizing Exploration (VIME)~\cite{houthooft2016vime}. 
The experimental results are shown in Figure~\ref{fig:fig_compare}. 
From Figure~\ref{fig:fig_compare} (1\textsuperscript{st} row), we can see that MISC-f enables the agent to learn, while DIAYN does not. 
In the 2\textsuperscript{nd} row of Figure~\ref{fig:fig_compare}, MISC-r performs better than VIME. 
This result indicates that the MI between states is a crucial quantity for accelerating learning. 
The MI intrinsic rewards boost performance significantly compared to VIME. 
This observation is consistent with the experimental results of MISC-p and PER, as shown in Figure~\ref{fig:fig_compare} (3\textsuperscript{rd} row), where the MI-based prioritization framework performs better than the TD-error-based approach, PER. 
On all tasks, MISC enables the agent to learn the benchmark task more quickly.


\begin{question}
Can the learned MI estimator be transferred to new tasks?
\end{question}
\begin{wrapfigure}[10]{r}{0.55\textwidth}
    \vspace{-1em}
    \includegraphics[width=\linewidth]{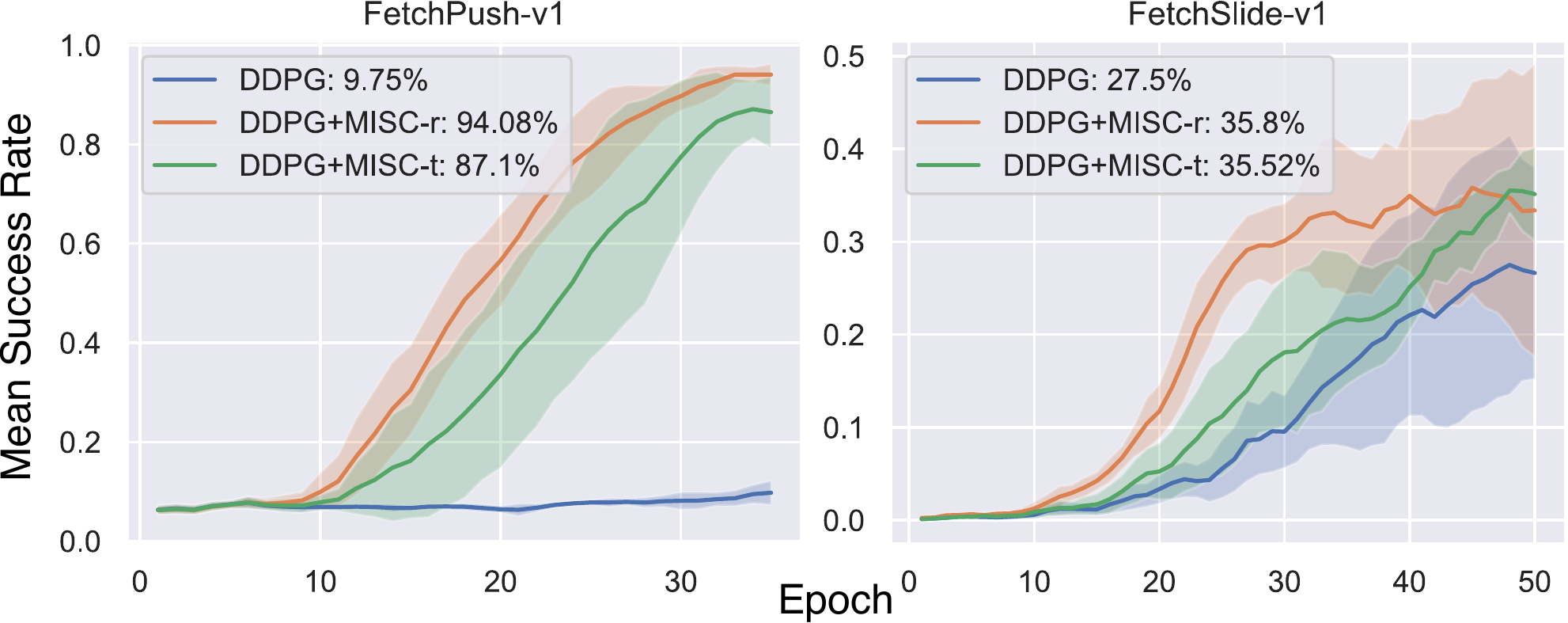}
    \vspace{-2em}
    \caption{\textbf{Transferred MISC}\label{fig:transfer}}
\end{wrapfigure}
It would be beneficial if the pretrained MI estimator could be transferred to a new task and still improve the performance~\cite{pan2010survey,bengio2012deep}. 
To verify this idea, we directly applied the pretrained MI estimator from the pick \& place environment to the push and slide environments, respectively. 
We denote this transferred method as ``MISC-t'', where ``-t'' stands for transfer. 
The MISC reward function trained in its corresponding environments is denoted as ``MISC-r''. 
We compared the performances of DDPG, MISC-r, and MISC-t. 
The results are in Figure~\ref{fig:transfer}, which shows that the transferred MISC still improved the performance significantly. 
Furthermore, as expected, MISC-r performed better than MISC-t. 
We can see that the MI estimator can be trained in a task-agnostic~\cite{finn2017model} fashion and later utilized in unseen tasks.


\begin{question}
How does MISC distribute rewards over a trajectory?
\end{question}
\begin{wrapfigure}[8]{r}{0.55\textwidth}
    \vspace{-1em}
    \includegraphics[width=\linewidth]{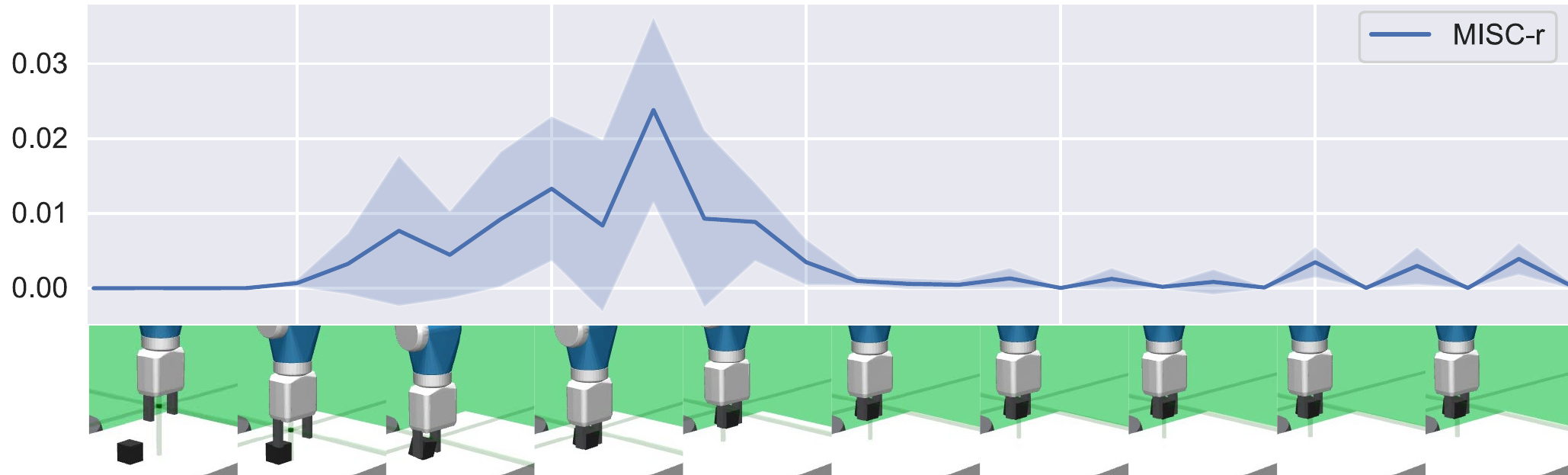}
    \vspace{-2em}
    \caption{\textbf{MISC rewards over a trajectory}\label{fig:misc-r}}
\end{wrapfigure}
To understand why MISC works, we visualize the learned MISC-r in Figure~\ref{fig:misc-r} and in the \href{https://youtu.be/CT4CKMWBYz0?t=92}{video starting from 1:32}. 
We can observe that the MI reward peaks between the 4th and 5th frame, where the robot quickly picks up the cube from the table. 
Around the peak reward value, the middle range reward values are corresponding to the relatively slow movement of the object and the gripper (see the 3rd, 9th, and 10th frame). 
When there is no contact between the gripper and the cube (see the 1st \& 2nd frames), or the gripper holds the object still (see the 6th to 8th frames) the intrinsic reward remains nearly zero. 
From this example, we see that MISC distributes positive intrinsic rewards when the goal state has correlated changes with the controllable state.


\begin{question}
Can MISC help the agent to learn when there are no objects or multiple objects? 
\end{question}
In the navigation task, we define the MISC objective to be the MI between the left wheel and the right wheel. 
We observe that the agent learns to balance itself and run in a straight line, 
as shown in the \href{https://youtu.be/CT4CKMWBYz0?t=134}{video starting from 2:14}. When there are multiple objects to control, we define the MISC objective as: 
$
\mathcal{F}_{\text{MISC}}= \sum_i I(S^c; S_i^g).
$
In the case that there is a red and a blue ball on the ground, with MISC, the agent learns to reach both balls and sometimes also learns to use one ball to hit the other ball. 
The results are shown in the uploaded \href{https://youtu.be/CT4CKMWBYz0?t=149}{video starting from 2:29}.



\textbf{Summary and Future Work:}
We can see that, with different combinations of the goal states and the controllable states, the agent is able to learn different control behaviors. 
When there are no specific goal states involved, we can train a skill-conditioned policy corresponding to different combinations of the two sets of states and later use the pretrained policy for the tasks at hand, see Appendix~\ref{app:automatic-goal-state-discovery} ``Automatic Discovery of Controllable States and Potential Goal States'' and Appendix~\ref{app:skill-discovery-hrl} ``Skill Discovery for Hierarchical Reinforcement Learning''.


\section{Related Work}

Deep RL led to great successes in various tasks \cite{ng2006autonomous,peters2008reinforcement,mnih2015human,levine2016end,zhao2018improving,zhao2018efficient,zhao2018learning}.
However, RL via intrinsic motivation is still a challenging topic.
Intrinsic rewards are often used to help the agent learn more efficiently to solve tasks. For example, \citet{jung2011empowerment} and \citet{mohamed2015variational} use empowerment, which is the channel capacity between states and actions, for intrinsically motivated RL agents. A theoretical connection between MISC and empowerment is shown in Appendix~\ref{app:proof-empowerment}.
VIME~\cite{houthooft2016vime} and ICM~\cite{pathak2017curiosity} use curiosity as intrinsic rewards to encourage the agents to explore the environment more thoroughly.
Another line of work on intrinsic motivation for RL is to discover meaningful skills.
Variational Intrinsic Control (VIC) \cite{gregor2016variational} proposes an information-theoretical objective~\cite{barber2003algorithm} to jointly maximize the entropy of a set of options while keeping the options distinguishable based on the final states of the trajectory. 
Recently, \citet{eysenbach2018diversity} introduced DIAYN, which maximizes the MI between a fixed number of skill-options and the entire states of the trajectory. \citet{eysenbach2018diversity} show that DIAYN can scale to more complex tasks compared to VIC and provides a handful of low-level primitive skills as the basis for hierarchical RL.
Intrinsic motivation also helps the agent to learn goal-conditioned policies. 
\citet{warde2018unsupervised} proposed DISCERN, a method to learn a MI objective between the states and goals, which enables the agent to learn to achieve goals in environments with continuous high-dimensional observation spaces. 
Based on DISCERN, \citet{pong2019skew} introduced Skew-fit, which adapts a maximum entropy strategy to sample goals from the replay buffer~\cite{zhao2019curiosity,zhao2019maximum} in order to make the agent learn more efficiently in the absence of rewards.
More recently, \citet{hartikainen2019dynamical} proposed to automatically learn dynamical distances, which are defined as a measure of the expected number of time steps to reach a given goal that can be used as intrinsic rewards for accelerating learning to achieve goals.
Based on a similar motivation as previous works, we introduce MISC, a method that uses the MI between the goal states and the controllable states as intrinsic rewards. 
In contrast to previous works on intrinsic rewards~\cite{mohamed2015variational,houthooft2016vime,pathak2017curiosity,eysenbach2018diversity,warde2018unsupervised}, MISC encourages the agent to interact with the interested part of the environment, which is represented by the goal state, and learn to control it. 
The MISC intrinsic reward is critical when controlling a specific subset of the environmental state is the key to complete the task, such as the case in robotic manipulation tasks.
Our method is complementary to these previous works, such as DIAYN and DISCERN, and can be combined with them. 
Inspired by previous works~\cite{schaul2015prioritized,houthooft2016vime,zhao2018energy,eysenbach2018diversity}, we additionally demonstrate three variants, including MISC-based fine-tuning, rewarding, and prioritizing mechanisms, to accelerate learning when the task rewards are available.


\section{Conclusion}

This paper introduces Mutual Information-based State-Control (MISC), an unsupervised RL framework for learning useful control behaviors.  
The derived efficient MI-based theoretical objective encourages the agent to control states without any task reward. 
MISC enables the agent to self-learn different control behaviors, which are non-trivial, intuitively meaningful, and useful for learning and planning.
Additionally, the pretrained policy and the MI estimator significantly accelerate learning in the presence of task rewards. 
We evaluated three MISC-based variants in different environments and demonstrate a substantial improvement in learning efficiency compared to state-of-the-art methods.


\section*{Broader Impact}

The broader impact of this work would be improving the learning efficiency of robots for continuous control tasks, such as navigation and manipulation.
In the future, when the learning robots are intelligent enough, they might be able to complete some of the repeatable or dangerous works for us.
Therefore, we could imagine the positive outcomes of this work for the society.


\bibliography{reference}
\bibliographystyle{plainnat}


\appendix

\section{Surrogate Objective}
\label{app:proof-surrogate}
\begin{lemma}
The mutual information quantity $I_{\phi}(S^g; S^c \mid \Tau)$ increases when we maximize the surrogate objective $\EE_{\PM{\Tau'}} [ I_{\phi}(S^g; S^c \mid \Tau')]$, mathematically,  
\begin{align}
\label{eq:mi-eq-surrogate}
I_{\phi}(S^g; S^c \mid \Tau) \ltimes \EE_{\PM{\Tau'}} [ I_{\phi}(S^g; S^c \mid \Tau')],
\end{align}
where $S^g$, $S^c$, and $\Tau$ denote goal states, controllable states, and trajectories, respectively. The trajectory fractions are defined as the adjacent state pairs, namely $\Tau'=\{S_{t}, S_{t+1}\}$.
The symbol $\ltimes$ denotes a monotonically increasing relationship between two variables and $\phi$ represents the parameter of the statistics model in MINE.
\end{lemma}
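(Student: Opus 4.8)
The plan is to expand both sides of~(\ref{eq:mi-eq-surrogate}) with the MINE estimator using the \emph{same} statistics network $\SN$, separate each side into its linear expectation term and its concave log-partition term, and then compare the two terms individually. Writing the trajectory as $\tau=\{s_1,\dots,s_{t^*}\}$ and abbreviating $a_i=\SN(s_i^g,s_i^c)$ and $b_i=e^{\SN(s_i^g,\bar s_i^c)}$, the full-trajectory estimate reads $I_\phi(S^g;S^c\mid\Tau)=\frac{1}{t^*}\sum_i a_i-\log(\frac{1}{t^*}\sum_i b_i)$, whereas each fraction $\Tau'=\{S_t,S_{t+1}\}$ contributes $\frac12(a_t+a_{t+1})-\log(\frac12(b_t+b_{t+1}))$, and $\EE_{\PM{\Tau'}}[\cdot]$ averages these over $t$. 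The goal is to show that these two scalars are monotonically coupled, i.e. that ascending the right-hand side drives the left-hand side upward.

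First I would treat the linear term. The full-trajectory first term $\frac{1}{t^*}\sum_i a_i$ and the averaged fractional first term $\frac{1}{t^*-1}\sum_t \frac12(a_t+a_{t+1})$ are both fixed convex combinations of the same quantities $\{a_i\}$; the latter merely reweights the two endpoints by $\frac12$. Hence these two terms are related by an affine, weight-preserving averaging, so they increase together whenever the joint-sample alignment $\SN(s_i^g,s_i^c)$ grows. This step is routine once the reweighting bookkeeping is made explicit.

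Second I would handle the log-partition term, where the nonlinearity lives. Setting $c_t=\frac12(b_t+b_{t+1})$, concavity of $\log$ (Jensen) gives $\frac{1}{t^*-1}\sum_t\log c_t\le\log(\frac{1}{t^*-1}\sum_t c_t)$, and $\frac{1}{t^*-1}\sum_t c_t$ equals, up to the same endpoint reweighting, the full partition average $\frac{1}{t^*}\sum_i b_i$. Since this term enters with a negative sign, the inequality is consistent with the surrogate tracking and upper-bounding the full estimate, and—more to the point—both log-partition terms are increasing functions of the same off-diagonal energies $\{b_i\}$. Combining the two terms, the surrogate and the full estimate are both monotone increasing functions of the single underlying degree to which $\SN$ separates the joint distribution $\PJ{S^g}{S^c}$ from the product $\PI{S^g}{S^c}$, which is what the relation $\ltimes$ asserts.

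The main obstacle is that $\ltimes$ is not a standard order relation, so the real work is pinning down the common variable with respect to which both sides are claimed monotone; I would take this to be the separation margin of $\SN$ between $\PJ{S^g}{S^c}$ and $\PI{S^g}{S^c}$ and argue monotonicity along the gradient-ascent path on $\phi$. A secondary difficulty is that the product-distribution samples $\bar s_i^c$ are produced by temporal shuffling, so the $\{b_i\}$ used in the full and in the fractional computations need not coincide term by term; I would control this by noting that the shuffle is exchangeable, so the two partition functions share the same expectation over shuffles, while the endpoint reweighting contributes only an $\mathcal{O}(1/t^*)$ boundary term that vanishes for long trajectories.
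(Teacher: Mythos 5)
Your route is genuinely different from the paper's, though it arrives at the same (informal) conclusion. The paper's proof first applies the heuristic $\log(x)\ltimes x$ to replace $-\log\bigl(\EE_{\PI{S^g\mid\Tau}{S^c\mid\Tau}}[e^{\FGen_\phi}]\bigr)$ by $-\EE_{\PI{S^g\mid\Tau}{S^c\mid\Tau}}[e^{\FGen_\phi}]$, so that the whole bound becomes linear in empirical expectations and therefore decomposes \emph{exactly} (by linearity) into $\EE_{\PM{\Tau'}}[\cdot]$ over trajectory fractions; it then reinstates the $\log$ inside each fraction by invoking $\log(x)\ltimes x$ a second time. You instead keep the $\log$ on both sides and compare the two Donsker--Varadhan terms separately: the linear terms agree up to an endpoint reweighting, and the log-partition terms are related by Jensen's inequality, from which the surrogate upper-bounds the full-trajectory estimate. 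What the paper's de-logging buys is that the fraction decomposition is presented as an identity and the argument reduces to two uses of one monotonicity device; what your version buys is that it surfaces two issues the paper's ``equality'' step silently absorbs, namely the $\mathcal{O}(1/t^*)$ boundary reweighting of the endpoints and the fact that shuffling within a two-state fraction draws negative pairs from a much smaller set than shuffling over the whole trajectory (which you handle via exchangeability of the shuffle).

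Be aware, though, that your final step inherits the same logical weakness as the paper's: a one-sided bound (your Jensen step, or the paper's $A-\log B\ltimes A-B$) does not by itself establish that increasing the right-hand side increases the left-hand side, and there is no single scalar ``separation margin'' in which both functionals of $\phi$ are provably monotone --- they depend on the full vectors $\{a_i\}$ and $\{b_i\}$, which need not move in lockstep along the gradient-ascent path. Since the relation $\ltimes$ is never given a precise meaning, this gap is shared by the paper's own proof rather than introduced by yours; your explicit acknowledgement that the real work lies in pinning down the common monotone variable is the more candid treatment of the two.
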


\begin{proof}
The derivation of the MI surrogate objective in Equation~(\ref{eq:mi-eq-surrogate}) is shown as follows:
\begin{align}
\label{eq:mi-eq-donsker}
I_{\phi}(S^g; S^c \mid \Tau)
=& \EE_{\PJ{S^g}{S^c \mid \Tau}}[\FGen_\phi] - \log(\EE_{\PI{S^g \mid \Tau}{S^c \mid \Tau}}[e^{\FGen_\phi}]) \\
\label{eq:mi-eq-x}
\ltimes& \EE_{\PJ{S^g}{S^c \mid \Tau}}[\FGen_\phi] - \EE_{\PI{S^g \mid \Tau}{{S}^c \mid \Tau}}[e^{\FGen_\phi}]\\
\label{eq:mi-eq-fraction-x}
=& \EE_{\PM{\Tau'}} [ \EE_{\PJ{S^g}{S^c \mid \Tau'}}[\FGen_\phi] - \EE_{\PI{S^g \mid \Tau'}{S^c \mid \Tau'}}[e^{\FGen_\phi}] ] \\
\label{eq:mi-eq-fraction-log}
\ltimes& \EE_{\PM{\Tau'}} [ \EE_{\PJ{S^g}{S^c \mid \Tau'}}[\FGen_\phi] - \log(\EE_{\PI{S^g \mid \Tau'}{S^c \mid \Tau'}}[e^{\FGen_\phi}]) ]
= \EE_{\PM{\Tau'}} [ I_{\phi}(S^g; S^c \mid \Tau')],
\end{align}
where $T_{\phi}$ represents a neural network, whose inputs are state samples and the output is a scalar.
For simplicity, we use the symbol $\ltimes$ to denote a monotonically increasing relationship between two variables, for example, $\log(x) \ltimes x$ means that as the value of $x$  increases, the value of $\log(x)$ also increases and vice versa.
To decompose the lower bound Equation~(\ref{eq:mi-eq-donsker}) into small parts, we make the following derivations, see Equation~(\ref{eq:mi-eq-x},\ref{eq:mi-eq-fraction-x},\ref{eq:mi-eq-fraction-log}).
Deriving from Equation~(\ref{eq:mi-eq-donsker}) to Equation~(\ref{eq:mi-eq-x}), we use the property that $\log(x) \ltimes x$.
Here, the new form, Equation~(\ref{eq:mi-eq-x}), allows us to decompose the MI estimation into the expectation over MI estimations of each trajectory fractions, Equation~(\ref{eq:mi-eq-fraction-x}). 
To be more specific, we move the implicit expectation over trajectory fractions in Equation~(\ref{eq:mi-eq-x}) to the front, and then have Equation~(\ref{eq:mi-eq-fraction-x}).
The quantity inside the expectation over trajectory fractions is the MI estimation using only each trajectory fraction, see Equation~(\ref{eq:mi-eq-fraction-x}). We use the property, $\log(x) \ltimes x$, again to derive from Equation~(\ref{eq:mi-eq-fraction-x}) to Equation~(\ref{eq:mi-eq-fraction-log}).
\end{proof}

\section{Connection to Empowerment}
\label{app:proof-empowerment}
The state $S$ contains the goal state $S^g$ and the controllable state $S^c$. 
For example, in robotic tasks, the goal state and the controllable state represent the object state and the end-effector state, respectively. 
The action space is the change of the gripper position and the status of the gripper, such as open or closed. 
Note that, the agent's action directly alters the controllable state.

Here, given the assumption that the transform, $S^c=F(A)$, from the action, $A$, to the controllable state, $S^c$, is a smooth and uniquely invertible mapping~\cite{kraskov2004estimating}, then we can prove that the MISC objective, $I(S^c, S^g)$, is equivalent to the empowerment objective, $I(A, S^g)$.

The empowerment objective~\cite{klyubin2005empowerment,salge2014empowerment,mohamed2015variational} is defined as the channel capacity in information theory, which means the amount of information contained in the action $A$ about the state $S$, mathematically:
\begin{align}
\label{eq:empowerment}
\mathcal{E} = I(S, A).
\end{align}
Here, we replace the state variable $S$ with goal sate $S^g$, we have the empowerment objective as follows,
\begin{align}
\label{eq:empowerment_i}
\mathcal{E} = I(S^g, A).
\end{align}

\begin{theorem}
The MISC objective, $I(S^c, S^g)$, is equivalent to the empowerment objective, $I(A, S^g)$, given the assumption that the transform, $S^c=F(A)$, is a smooth and uniquely invertible mapping:
\label{th:misc=empowerment}
\begin{align}
\label{eq:misc=empowerment}
I(S^c, S^g) = I(A, S^g)
\end{align}
\end{theorem}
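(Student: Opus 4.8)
The plan is to exploit the fact that mutual information is invariant under smooth, uniquely invertible transformations of one of its arguments. Since $S^c = F(A)$ is assumed to be exactly such a mapping, $A$ and $S^c$ carry the same information about any third variable, in particular about $S^g$, and the equality in Equation~(\ref{eq:misc=empowerment}) should follow directly from this invariance.

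The cleanest route I would take is a two-sided data-processing argument. Because $S^c$ is a deterministic function of $A$, the triple forms a Markov chain $S^g \to A \to S^c$, so the data-processing inequality yields $I(S^g; S^c) \le I(S^g; A)$. Conversely, invertibility gives $A = F^{-1}(S^c)$, whence $S^g \to S^c \to A$ is also a Markov chain, producing the reverse inequality $I(S^g; A) \le I(S^g; S^c)$. Combining the two inequalities forces equality, which is precisely the claim.

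Alternatively, for a more explicit computation in the continuous setting I would decompose $I(A; S^g) = H(S^g) - H(S^g \mid A)$ and $I(S^c; S^g) = H(S^g) - H(S^g \mid S^c)$. The marginal term $H(S^g)$ is common to both, so it suffices to show $H(S^g \mid A) = H(S^g \mid S^c)$. Since $F$ is a bijection, the events $\{A=a\}$ and $\{S^c = F(a)\}$ coincide, hence the conditional laws $p(s^g \mid A=a)$ and $p(s^g \mid S^c = F(a))$ are identical and the two conditional entropies agree.

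The main obstacle is making the continuous-variable version rigorous, because for differential entropies the change of variables $S^c = F(A)$ introduces a Jacobian term $\EE[\log|\det \nabla F(A)|]$, so the individual entropies are \emph{not} preserved. The key point I would emphasise is that this Jacobian contribution enters the joint entropy $H(S^c, S^g)$ and the marginal entropy $H(S^c)$ identically, and therefore cancels in the difference $I(S^c; S^g) = H(S^c) + H(S^g) - H(S^c, S^g)$; the smoothness and invertibility assumptions are exactly what guarantee the Jacobian is finite and non-vanishing so that this cancellation is valid. Since the two-sided data-processing argument sidesteps the Jacobian bookkeeping entirely, I would present that version first and use the entropy computation as a corroborating remark.
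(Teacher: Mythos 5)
Your proposal is correct, but your primary argument differs from the paper's. The paper proves the identity by an explicit change of variables at the density level: it writes $I(S^c, S^g)$ as the double integral of $p(s^c,s^g)\log\frac{p(s^c,s^g)}{p(s^c)p(s^g)}$, substitutes $p(s^c,s^g)=\left\|\frac{\partial A}{\partial S^c}\right\| p(a,s^g)$, observes that the Jacobian factor $J_A(s^c)$ cancels inside the logarithm, and transforms the integration measure from $ds^c$ to $da$ to land on $I(A,S^g)$ --- i.e., exactly the Jacobian-cancellation computation you relegate to your ``corroborating remark.'' Your lead argument, the two-sided data-processing inequality via the Markov chains $S^g \to A \to S^c$ and $S^g \to S^c \to A$, is a genuinely different and arguably stronger route: it needs only that $F$ be a measurable bijection (determinism in each direction suffices for both Markov structures), requires no densities, no smoothness, and no Jacobian bookkeeping, and so covers cases where the paper's integral manipulation would not literally apply. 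What the paper's computation buys in exchange is an explicit, self-contained verification in the continuous-density setting it actually works in, consistent with the \citet{kraskov2004estimating} invariance result it cites. Your observation that the Jacobian contributions to $H(S^c)$ and $H(S^c,S^g)$ cancel in the mutual information is precisely the mechanism underlying the paper's proof, so your secondary remark recovers their argument; no gap in either route.
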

where $S^g$, $S^c$, and $A$ denote the goal state, the controllable state, and the action, respectively.
\begin{proof}
\begin{align}
\label{eq:equivalence-prove}
I(S^c, S^g) &= \int \int d{s^c} d{s^g} p(s^c,s^g) \log \frac{p(s^c,s^g)}{p(s^c)p(s^g)}  \\ 
            &= \int \int d{s^c} d{s^g} \left\|\frac{\partial A}{\partial S^c}\right\| p(a,s^g) \log \frac{\left\|\frac{\partial A}{\partial S^c}\right\| p(a,s^g)}{\left\|\frac{\partial A}{\partial S^c}\right\|p(a)p(s^g)}  \\
            &= \int \int d{s^c} d{s^g} J_{A}(s^c) p(a,s^g) \log \frac{J_{A}(s^c) p(a,s^g)}{J_{A}(s^c)p(a)p(s^g)}  \\ 
            &= \int \int d{a} d{s^g} p(a,s^g) \log \frac{p(a,s^g)}{p(a)p(s^g)}  \\
            &= I(A, S^g)
\end{align}
\end{proof}


\section{Learned Control Behaviors without Supervision}
\label{app:learned-control-behavior}
The learned control behaviors without supervision are shown in Figure~\ref{fig:all4skills} as well as in the uploaded \href{https://youtu.be/CT4CKMWBYz0?t=4}{video starting from 0:04}.
\begin{figure*}[h]
  \centering
  \includegraphics[width=\linewidth]{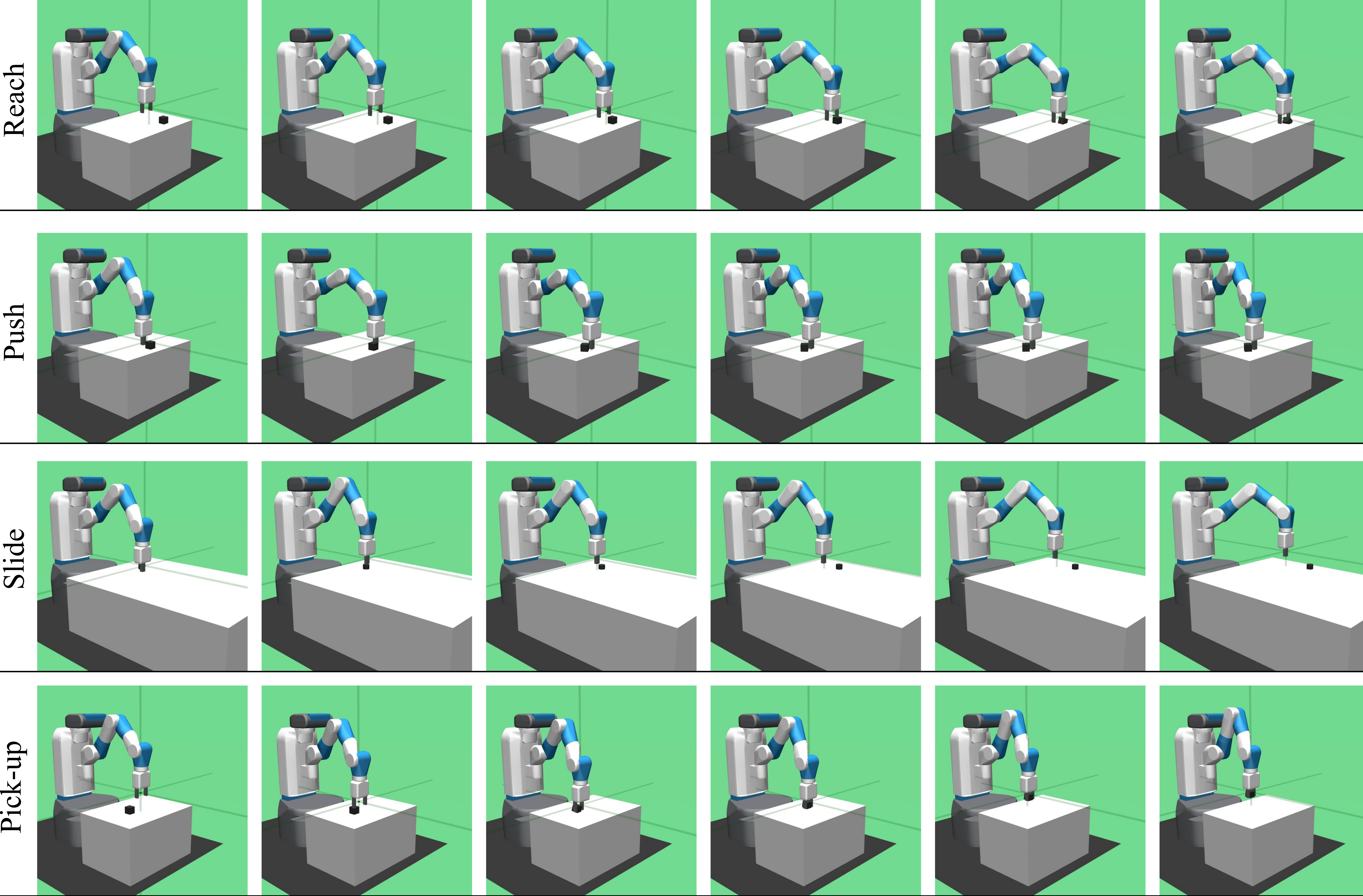}
  \caption{\textbf{Learned Control behaviors with MISC}: Without any reward, MISC enables the agent to learn control behaviors, such as reaching, pushing, sliding, and picking up an object. The learned behaviors are shown in the uploaded \href{https://youtu.be/CT4CKMWBYz0?t=4}{video starting from 0:04}.}
  \label{fig:all4skills}
\end{figure*}


\section{Comparison of Variational MI-based and MINE-based Implementations}
\label{app:compare-two-mi}

Here, we compare the variational approach-based \cite{barber2003algorithm} implementation of MISC and MINE-based implementation \cite{belghazi2018mine} of MISC in Table~\ref{tab:im-mine}.
All the experiments are conducted with 5 different random seeds. 
The performance metric is mean success rate (\%) $\pm$ standard deviation.
The “Task-r” stands for the task reward. 
\begin{table}[h]
\centering
\caption{\textbf{Comparison of variational MI (v)-based and MINE (m)-based MISC}}
\begin{tabular}{ p{4.cm}    p{3.cm}   p{3.cm} } \toprule
Method  & Push (\%) & Pick \& Place (\%) \\ \midrule
Task-r+MISC(v) & 94.9\% $\pm$ 5.83\%  & 49.17\% $\pm$ 4.9\%  \\
Task-r+MISC(m)  & 94.83\% $\pm$ 4.95\% & 50.38\% $\pm$ 8.8\%  \\ \bottomrule
\end{tabular}
\label{tab:im-mine}
\end{table}
From Table~\ref{tab:im-mine}, we can see that the performance of these two MI estimation methods are similar. 
However, the variational method introduces additional complicated sampling mechanisms, and two additional hyper-parameters, i.e., the number of the candidates and the type of the similarity measurement \cite{barber2003algorithm,eysenbach2018diversity,warde2018unsupervised}. 
In contrast, MINE-style MISC is easier to implement and has less hyper-parameters to tune. 
Furthermore, the derived surrogate objective improves the scalability of the MINE-style MISC.


\section{Automatic Discovery of Controllable States and Potential Goal States}
\label{app:automatic-goal-state-discovery}

When there are no specific goal states involved, we can train a skill-conditioned policy corresponding to different combinations of the two sets of states and later use the pretrained policy for the tasks at hand.
The controllable states can be automatically determined based on the MI between action $A$ and a state variable $S^{i}$, $I(A, S^i)$.
If the MI value is relatively high, then the $i$-th state variable is determined as controllable. Other states with relatively low MI with actions are considered as potential goal states.

For example, in the Fetch robot arm pick \& place environment, we have the follow states as the observation: \texttt{grip\_pos}, \texttt{grip\_velp}, \texttt{object\_pos}, \texttt{object\_velp}, \texttt{object\_rot}, \texttt{object\_velr}, where the abbreviation ``\texttt{pos}'' stands for position; ``\texttt{rot}'' stands for rotation; ``\texttt{velp}'' stands for linear velocity, and ``\texttt{velr}'' stands for rotational velocity.
In Table~\ref{tab:mi-action-state}, we show the MI estimation between action and each state based on a batch of random rollout trajectories.
\begin{table}[h]
\centering
\caption{\textbf{Mutual information estimation between action and state}}
\begin{tabular}{ p{5.cm}    p{3.cm}  } \toprule
Mutual Information  & Value  \\ \midrule
MI(\texttt{action}; \texttt{grip\_pos}) & 0.202 $\pm$ 0.142 \\
MI(\texttt{action}; \texttt{grip\_velp}) & 0.048 $\pm$ 0.043 \\
MI(\texttt{action}; \texttt{object\_pos}) & 0.000 $\pm$ 0.001 \\
MI(\texttt{action}; \texttt{object\_velp}) & 0.034 $\pm$ 0.030 \\
MI(\texttt{action}; \texttt{object\_rot}) & 0.018 $\pm$ 0.054 \\
MI(\texttt{action}; \texttt{object\_velr}) & 0.006 $\pm$ 0.018   \\ \bottomrule
\end{tabular}
\label{tab:mi-action-state}
\end{table}
From Table~\ref{tab:mi-action-state}, we can see that the state random variable \texttt{grip\_pos} has the highest MI with the action random variable. Therefore, \texttt{grip\_pos} is determined as controllable states.
In contrast, the state random variable \texttt{object\_pos} has the lowest MI with actions. Thus, we consider \texttt{object\_pos} as a potential goal states, which the agent should learn to control.

\section{Skill Discovery for Hierarchical Reinforcement Learning}
\label{app:skill-discovery-hrl}

In this section, we consider \texttt{grip\_pos} and \texttt{grip\_velp} as controllable states, and states, including \texttt{object\_pos}, \texttt{object\_velp}, \texttt{object\_rot}, \texttt{object\_velr} as potential goal states, based on Table~\ref{tab:mi-action-state}.
In Table~\ref{tab:mi-prior-post}, we show the MI value with different state-pair combinations prior to training and post to training. When the MI value difference is high, it means that the agent has a good learning progress with the corresponding MI objective.
\begin{table}[h]
\centering
\caption{\textbf{Mutual Information estimation prior and post to the training}}
\begin{tabular}{ p{5.cm}    p{4.cm}  p{4.cm}  } \toprule
Mutual Information Objective  & Prior-train Value & Post-train Value  \\ \midrule
MI(\texttt{grip\_pos}; \texttt{object\_pos}) & 0.003 $\pm$ 0.017 & 0.164 $\pm$ 0.055 \\
MI(\texttt{grip\_pos}; \texttt{object\_rot}) & 0.017 $\pm$ 0.084 & 0.461 $\pm$ 0.088 \\
MI(\texttt{grip\_pos}; \texttt{object\_velp}) & 0.005 $\pm$ 0.010 & 0.157 $\pm$ 0.050 \\
MI(\texttt{grip\_pos}; \texttt{object\_velr}) & 0.016 $\pm$ 0.083 & 0.438 $\pm$ 0.084 \\
MI(\texttt{grip\_velp}; \texttt{object\_pos}) & 0.004 $\pm$ 0.024 & 0.351 $\pm$ 0.213 \\
MI(\texttt{grip\_velp}; \texttt{object\_rot}) & 0.019 $\pm$ 0.092 & 0.420 $\pm$ 0.043 \\
MI(\texttt{grip\_velp}; \texttt{object\_velp}) & 0.005 $\pm$ 0.011 & 0.001 $\pm$ 0.002 \\
MI(\texttt{grip\_velp}; \texttt{object\_velr}) & 0.015 $\pm$ 0.081 & 0.102 $\pm$ 0.063   \\ \bottomrule
\end{tabular}
\label{tab:mi-prior-post}
\end{table}
From Table~\ref{tab:mi-prior-post} first row, we can see that with the intrinsic reward MI(\texttt{grip\_pos}; \texttt{object\_pos}), the agent achieves a high MI after training, which means that the agent learns to better control the object positions using its gripper. Similarly, in the second row of the table, with MI(\texttt{grip\_pos}; \texttt{object\_rot}), the agent learns to control object rotation with its gripper.
In contrast, from the second last row in the table, we can see that with MI(\texttt{grip\_velp}; \texttt{object\_velp}), the agent did not learn anything.

From the experimental results, we can see that with different combination of state-pairs of controllable and goal states, the agent can learn different skills, such as manipulate object positions or rotations.
We can connect these learned skills with different skill-options \cite{eysenbach2018diversity} and train a meta-controller to control these motion primitives to complete long-horizon tasks in a hierarchical reinforcement learning framework \cite{eysenbach2018diversity}. We consider this as a future research direction, which could be a solution in solving more challenging and complex long-horizon tasks.


\section{Experimental Details}
\label{app:experimental-details}

The experiments of the robotic manipulation tasks in this paper use the following hyper-parameters:
\begin{itemize}
    \item Actor and critic networks: $3$ layers with $256$ units each and ReLU non-linearities
    \item Adam optimizer~\cite{kingma2014adam} with $1\cdot10^{-3}$ for training both actor and critic
    \item Buffer size: $10^6$ transitions
    \item Polyak-averaging coefficient: $0.95$
    \item Action L2 norm coefficient: $1.0$
    \item Observation clipping: $[-200, 200]$
    \item Batch size: $256$
    \item Rollouts per MPI worker: $2$
    \item Number of MPI workers: $16$
    \item Cycles per epoch: $50$
    \item Batches per cycle: $40$
    \item Test rollouts per epoch: $10$
    \item Probability of random actions: $0.3$
    \item Scale of additive Gaussian noise: $0.2$
    \item Scale of the mutual information reward: $5000$
\end{itemize}

All hyper-parameters are described in greater detail at \url{https://github.com/ruizhaogit/misc/tree/master/params}.

\end{document}